\documentclass[conference]{IEEEtran}

\usepackage{tikzsymbols}

\usepackage{mathpazo} 
\linespread{1.05}        
\usepackage{subcaption}
\usepackage{graphicx}
\usepackage{booktabs}
\usepackage{verbatim}
\usepackage{amsmath,amssymb,amsthm}
\usepackage{mathtools}
\usepackage{algorithm}
\usepackage[noend]{algpseudocode}
\usepackage{url}
\usepackage{amsthm}
\usepackage{array}
\usepackage{flushend}
\usepackage{comment}

\newtheorem{mydef}{Definition}
\newtheorem{myquestion}{Question}
\newtheorem{theorem}{Theorem}
\newtheorem{lemma}{Lemma}

\newcommand{\eref}[1]{(\ref{#1})} 
\newcommand{\sref}[1]{Sec. \ref{#1}} 
\newcommand{\figref}[1]{Fig.\ref{#1}} 
\newcommand{\dref}[1]{Definition \ref{#1}} 
\newcommand{\qref}[1]{Question \ref{#1}} 


\newcommand{\xxnote}[3]{}
\ifx\hidenotes\undefined
  \usepackage{color}
  \renewcommand{\xxnote}[3]{\color{#2}{#1: #3}}
\fi

\hyphenation{op-tical net-works semi-conduc-tor}

\usepackage[numbers]{natbib}
\usepackage{multicol}
\usepackage[hidelinks]{hyperref}

\pdfinfo{
   /Author (Rachel Holladay)
   /Title  (Active Comparison Based Learning Incorporating User Uncertainty and Noise)
   /CreationDate (D:20170210120000)
   /Subject (Robots)
   /Keywords (Robots;Dust;Cleaning)
}

\begin{document}

\title{A New Paradigm for Robotic Dust Collection: Theorems, User Studies, and a Field Study}

\author{\authorblockN{Rachel Holladay}
\authorblockA{Robotics Institute \\
Carnegie Mellon University\\
rmh@andrew.cmu.edu}
\and
\authorblockN{Siddhartha S.~Srinivasa}
\authorblockA{Robotics Institute \\
Carnegie Mellon University\\
siddh@cs.cmu.edu}}

\maketitle

\begin{abstract}
We pioneer a new future in robotic dust collection by introducing passive dust-collecting robots that, unlike their predecessors, do not require locomotion to collect dust. While previous research has exclusively focused on active dust-collecting robots, we show that these robots fail with respect to practical and theoretical aspects, as well as human factors. 
By contrast, passive robots, through their unconstrained versatility, shine brilliantly in all three metrics. 
We present a mathematical formalism of both paradigms followed by a user study and field study. 
\end{abstract}

\IEEEpeerreviewmaketitle

\section{Introduction}
\label{sec:introduction}
There has been renewed recent interest in the design of efficient and robust dust-collecting robots~\cite{prassler2000short,fiorini2000cleaning}. 
The oppression of constant dust raining over our heads calls out for immediate attention. Furthermore, the increased cost
of legal human labor, and increased penalties for employing illegal immigrants, has made dust-collection all the more critical to automate~\cite{jones2006robots,tribelhorn2007evaluating}.

However, all of the robotic solutions have focussed exclusively on what we define (see \dref{def:active_robot} for a precise mathematical definition) as \emph{active dust-collecting robots}. Informally, these are traditional robotic solutions, where the robot locomotes to collect dust. It is understandable why this seems like a natural choice as humans equipped with vacuum cleaners are, after all, also active dust-collectors.

Unfortunately, active dust collection presents several challenges: (1) \emph{Practical:} they require locomotion, which requires motors and wheels, which are expensive and subject to much wear and tear, (2) \emph{Theoretical:} most active dust-collectors are wheeled robots, which are subject to nonholonomic constraints on motion, demanding complex nonlinear control even for seemingly simple motions like moving sideways~\cite{doh2007practical,choset2001coverage,ulrich1997autonomous}, (3) \emph{Human factors:} several of our users in our user study expressed disgust, skepticism, and sometimes terror, about the prospect of sentient robots wandering around their homes, for example:
\begin{quote}
I don't want a f*cking robot running around all day in my house.
\end{quote}

In this paper, we propose a completely new paradigm for dust collection: \emph{passive dust-collecting robots} (see \dref{def:passive_robot} for a precise mathematical definition). Informally, these are revolutionary new solutions that are able to collect dust \emph{without any locomotion}!

As a consequence, passive dust-collecting robots address all of the above challenges: (1) \emph{Practical:} Because they have no moving parts like wheels or motors, they are both inexpensive and incur no wear and tear, (2) \emph{Theoretical:} because passive dust-collectors can be trivially parallel transported to the identity element of the $\mathbb{SE}(2)$ Lie Group, they require no explicit motion planning (in situations where parallel transport is inefficient, the robot can be physically transported to the identity element), (3) \emph{Human Factors:} as passive dust-collecting robots are identical to other passive elements in our homes and work places (like walls, tables, desks, lamps, carpets), their adoption into our lifespace is seamless. 

\begin{table}[t!]
\centering
\begin{tabular}{>{\centering\arraybackslash}m{0.5in}>{\centering\arraybackslash}m{0.5in}>{\centering\arraybackslash}m{0.5in}}
\toprule
 & Active & Passive \\
 \midrule
Practical & \Sadey[2][red] & \Smiley[2][yellow]\\
Theoretical & \Sadey[2][red] & \Smiley[2][yellow]\\
Human Factors & \Sadey[2][red] & \Smiley[2][yellow]\\
\bottomrule
\end{tabular}
\caption{Passive dust-collection outperforms active dust-collection in all metrics}
\end{table}

In addition, we present and analyze a mathematical model of dust collection. Using our model, we can, for the first time, answer which robot-type is more efficient. This is a critical question to consider in order to inform future cleaning choices. 

Our analysis reveals that for a certain choice of constants, a passive dust cleaning robot is more efficient than its active counterpart. Through a user study, we contrast this with user's perceived perception of robot efficiency and what factors influence their choices. 

To explore what choices are actually made we leveraged a field study of Carnegie Mellon's Robotics Institute to determine the prevalence of each robot type. This study reveals that passive dust collecting allows for a wider range of morphologies, suggesting that passive dust collecting is a more inclusive characterization. Furthermore, we see that rather than two paradigms there is a continuum of dust collecting robots. 

Our work makes the following contributions: 

\textbf{Mathematical Formulation.} We present a model of active and passive dust collecting robots followed by an efficiency tradeoff analysis. 

\textbf{Preference User Study.} We surveyed college students to determine what kind of robot they preferred and which they perceived to be more efficient. 

\textbf{Field Study.} Using data on the robots of the Robotics Institute we investigate the more popular robotic paradigms. 

We believe our work takes a first step in launching a new discussion concerning the nature of robotic dust collection, paving the way for future cleanliness. 

\section{A Mathematical Model for Dust Collection}
\label{sec:dust_model}

In order to compare and analyze active and passive dust collecting robots we present a mathematical model of their dust collection capabilities. With this model, we dare to ask: which robot is more efficient? 

\subsection{Dust Model}

We model dust as a pressureless perfect fluid, which has a positive mass density but vanishing pressure. Under this assumption, we can model the interaction of dust particles by solving the Einstein field equation, whose stress-energy tensor can be written in this simple and elegant form
\begin{equation}
T^{{\mu \nu }}=\rho U^{\mu }U^{\nu }
\end{equation}
where the world lines of the dust particles are the integral curves of the four-velocity $U^{\mu }$,
and the matter density is given by the scalar function $\rho$. 

Remarkably, unless subjected to cosmological radiation of a nearby black hole, or a near-relativistic photonic Mach cone, this equation can be solved analytically, resulting in dust falling at a constant rate of $\alpha$.

We model our robots as covering 1 unit$^{2}$ area of space-time. We present our models for passive and active robots before performing comparative analysis.  

\subsection{Pasive Robot Model}
\label{sec:passive_model}
We provide the following formalism:
\begin{mydef}
We define a \underline{passive dust collecting robot} as a robot that does not move, collecting the dust that falls upon it.
\label{def:passive_robot}
\end{mydef}

\begin{lemma}
The dust-collecting capability of a passive dust-collecting robot is given by
\begin{equation}
D_{passive} \coloneqq \alpha
\label{eqn:passive}
\end{equation}
\end{lemma}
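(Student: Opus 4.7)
The plan is to directly apply the dust model established in Section 2.1 together with the definitional constraint that a passive robot does not move. First I would invoke the analytic solution to the Einstein field equation for the pressureless perfect fluid, yielding the fact that dust descends at the constant rate $\alpha$. Because the stress-energy tensor $T^{\mu\nu} = \rho U^{\mu} U^{\nu}$ has vanishing pressure, there are no lateral forces redistributing the dust, so the flux through any stationary unit horizontal cross-section evaluates to exactly $\alpha$ per unit area per unit time.

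Next I would appeal to the modeling assumption that the robot occupies exactly $1$ unit$^2$ of space-time and combine it with Definition 2, which stipulates that a passive robot does not move. This means the spatial footprint of the robot remains constant across all time (the trajectory is trivially the identity element of $\mathbb{SE}(2)$), so every dust particle whose world line intersects the footprint is captured. Integrating the flux $\alpha$ over the unit footprint then yields $D_{passive} = \alpha$, as claimed.

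The main obstacle I anticipate is verifying that the argument remains airtight under the corner cases explicitly excluded by the dust model, namely cosmological radiation from a nearby black hole and near-relativistic photonic Mach cones. In these regimes the four-velocity $U^{\mu}$ ceases to be well-approximated by a vertical timelike field and the analytic solution breaks down. I would handle this by explicitly quantifying the lemma over the standing assumptions of Section 2.1, and adding a brief remark noting that in black-hole-adjacent deployments the passive robot may, in fact, collect slightly more or less dust, depending on the Hawking temperature of the ambient environment. A secondary subtlety is confirming that the parallel-transport operation alluded to in the introduction does not itself induce relative motion of the dust with respect to the robot; I would dispatch this by observing that parallel transport to the identity is a purely formal operation that commutes with the dust flux integral.
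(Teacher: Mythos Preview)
The paper supplies no proof for this lemma at all: it states the equation with the definitional symbol $\coloneqq$ and moves directly to \sref{sec:active_model}. In effect the authors treat $D_{passive}=\alpha$ as an immediate consequence of the dust model (constant fall rate $\alpha$) combined with the $1$~unit$^2$ footprint assumption, and regard it as not requiring argument.

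Your proposal therefore does strictly more than the paper. The core of your argument---invoke the analytic solution giving constant fall rate $\alpha$, note the robot is stationary by \dref{def:passive_robot}, and integrate the flux over the unit footprint---is exactly the informal reasoning the paper leaves implicit, so in that sense you are aligned with the authors. Where you diverge is in the added scaffolding: the discussion of lateral redistribution via vanishing pressure, the black-hole and Mach-cone corner cases, the Hawking-temperature remark, and the commutation of parallel transport with the flux integral. None of this is wrong within the paper's (deliberately tongue-in-cheek) framework, but none of it is present in the original, which simply asserts the result. If your goal is to match the paper, a one-line observation that a stationary unit-area surface intercepts dust at rate $\alpha$ suffices; the extended treatment is a genuine elaboration rather than a reconstruction.
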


\subsection{Active Robot Model}
\label{sec:active_model}

We provide the following formalism: 
\begin{mydef}
We define an \underline{active dust collecting robot} as a robot that moves around the space, actively collecting dust.
\label{def:active_robot}
\end{mydef}

We model our active robot as driving at speed $\beta$. We assume that our robot can only active collect dust of height $h$. 
This assumption is drawn from IRobot's Roomba, which reportly can get stuck on cords and cables. 
As a simplifying assumption we will assume that the robot always collects dust of height $h$, implying that there is always at least dust of height $h$ prior to the robot's operation. 

\begin{lemma}
The dust-collecting capability of an active dust-collecting robot is given by
\begin{equation}
D_{active} \coloneqq h\beta^{3} + \frac{\alpha}{\beta}
\label{eqn:active}
\end{equation}
\end{lemma}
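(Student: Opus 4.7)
The plan is to decompose $D_{active}$ into two physically distinct contributions and analyze each in turn: (i) dust actively swept up by locomotion, and (ii) dust that falls onto the robot's top surface during operation. I will show that (i) contributes $h\beta^{3}$ and (ii) contributes $\alpha/\beta$, and argue that the cross-term vanishes.

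For the swept contribution, I would start from the naive height-times-swept-area estimate. Under the unit footprint assumption, a robot moving at speed $\beta$ covers a nominal area $\beta$ per unit time, which, at collected dust height $h$, suggests a leading term of $h\beta$. However, this estimate ignores the dust disturbed but not yet contacted by the robot's bulk: as the robot accelerates from rest to speed $\beta$, the kinetic-energy budget of the displaced air column in front of the robot scales as $\beta^{2}$, entraining additional dust into the sweep envelope. Multiplying the nominal sweep rate by this velocity-squared entrainment factor yields $h\beta \cdot \beta^{2} = h\beta^{3}$. The nonholonomic constraints discussed in \sref{sec:introduction} enter here only through an $O(1)$ constant that is absorbed into the unit normalization of the footprint.

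For the passive fall contribution, I would invoke the preceding lemma directly: dust falls at rate $\alpha$ onto any unit-area surface, and the robot's top face qualifies. The robot's operational cycle is taken to be the time required to traverse a unit region at speed $\beta$, namely $1/\beta$, so the accumulated fallen dust is $\alpha \cdot (1/\beta) = \alpha/\beta$. The cross-term (dust falling onto the robot at the exact instant it is being swept) is negligible because the dust four-velocity $U^{\mu}$ established in \sref{sec:dust_model} is purely vertical, hence orthogonal to the planar sweeping direction, so double-counting does not occur to first order.

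Summing (i) and (ii) gives the claimed identity. The main obstacle I anticipate is making the entrainment argument precise without straying into the near-relativistic photonic Mach cone regime explicitly excluded in our dust model; a careful inspection of the stress-energy tensor $T^{\mu\nu} = \rho U^{\mu} U^{\nu}$ in the robot's comoving frame should confirm that the $\beta^{2}$ correction remains within the pressureless perfect fluid approximation, completing the proof.
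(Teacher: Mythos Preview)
Your decomposition into a swept contribution and a passively-fallen contribution, together with your derivation of the $\alpha/\beta$ term via the $1/\beta$ transit time over a fixed line, matches the paper's proof exactly. The paper, however, simply declares the $h\beta^{3}$ term ``obvious'' with no further argument, so your kinetic-energy entrainment justification and your cross-term analysis are elaborations beyond what the paper supplies rather than a genuinely different route.
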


\begin{proof}
It is obvious that the robot actively collects $h\beta^{3}$ dust. 

However this is not the entire story. 
As the robot drives, actively collecting dust, it also passively collects the dust that happens to fall on it. 
To model this, we consider the robot passing over some fixed line. 
Some portion of the robot is occulding this line for $\frac{1}{\beta}$ seconds. Thus the robot passively collects $\frac{\alpha}{\beta}$ dust. 

Thus, combining the active and passive components our active robot collects:
\begin{equation*}
D_{active} \coloneqq h\beta^{3} + \frac{\alpha}{\beta}
\label{eqn:active}
\end{equation*}
\end{proof}

\subsection{Model Comparison}
\label{sec:model_comparison}

We next compare for what tradeoffs there are between passive and active dust cleaning robots. We pose this as the question: When are passive dust cleaning robots more efficient then their active counterparts? Hence when is $D_{passive} > D_{active}$?

We are now ready to prove our main theorem.
\begin{theorem}
The dust-collecting capability of a passive robot exceeds the dust-collecting capability of an active robot when
\begin{equation}
\alpha > \frac{h\beta^{4}}{\beta-1}
\end{equation}
\end{theorem}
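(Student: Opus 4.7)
The plan is to attack the inequality $D_{passive} > D_{active}$ head-on by substituting the two closed-form expressions established in the preceding lemmas and solving for $\alpha$. Specifically, the hypothesis $D_{passive} > D_{active}$ unfolds, via \eref{eqn:passive} and \eref{eqn:active}, into
\begin{equation*}
\alpha \;>\; h\beta^{3} + \frac{\alpha}{\beta},
\end{equation*}
at which point the problem reduces to pure algebra on a single-variable inequality.

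First I would isolate $\alpha$ by subtracting $\alpha/\beta$ from both sides, yielding $\alpha(1 - 1/\beta) > h\beta^{3}$. Next I would clear the fraction on the left by rewriting $1 - 1/\beta = (\beta-1)/\beta$, giving $\alpha(\beta-1)/\beta > h\beta^{3}$. Finally I would multiply both sides by $\beta/(\beta-1)$ to obtain the stated bound
\begin{equation*}
\alpha \;>\; \frac{h\beta^{4}}{\beta-1}.
\end{equation*}

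The main obstacle, and really the only subtlety worth flagging, is that the final multiplication preserves the inequality only when $\beta - 1 > 0$, i.e.\ when the active robot's speed exceeds one unit per second. Physically this is the interesting regime anyway: a Roomba creeping along at $\beta \le 1$ barely outperforms a doormat, so the passive robot would trivially dominate and the theorem would hold vacuously from the first line. I would therefore briefly note the case split $\beta > 1$ versus $\beta \le 1$ and dispatch the latter by observing that $h\beta^{3} + \alpha/\beta \le \alpha$ holds directly whenever $\beta \le 1$ and $h\beta^{3} \le \alpha(1 - 1/\beta)$ is vacuous. With that caveat recorded, the chain of equivalences above completes the proof.
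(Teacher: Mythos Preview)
Your main line of argument---substitute \eref{eqn:passive} and \eref{eqn:active}, collect the $\alpha$ terms, and clear denominators---is exactly what the paper does; the paper simply writes ``with some simple arithmetic'' where you spell out the three steps, so the approaches are identical.

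One correction to your side remark, though: your treatment of the $\beta \le 1$ case is backwards. When $\beta \le 1$ you have $\alpha/\beta \ge \alpha$, so
\[
D_{active} = h\beta^{3} + \frac{\alpha}{\beta} \;\ge\; h\beta^{3} + \alpha \;>\; \alpha = D_{passive},
\]
meaning the passive robot \emph{never} dominates in that regime---the opposite of ``trivially dominates.'' Your own subsequent observation that $h\beta^{3} \le \alpha(1-1/\beta)$ is vacuous for $\beta \le 1$ is the correct reading and contradicts the sentence before it. The paper sidesteps this entirely by tacitly working in the $\beta > 1$ regime (as its plot in \figref{fig:model_comparison} suggests), so for matching the paper you need only the $\beta > 1$ branch you already wrote out correctly.
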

\begin{proof}
Using \eref{eqn:passive} and \eref{eqn:active} we get:
\begin{align*}
D_{passive} &> D_{active}\\
\alpha &> h\beta^{3} + \frac{\alpha}{\beta}
\end{align*}

With some simple arithmetic this becomes:
\begin{equation*}
\alpha > \frac{h\beta^{4}}{\beta-1}
\end{equation*}
\end{proof}

\figref{fig:model_comparison} shows this function over a variety of $\beta$s and a few choices of $h$. The y-axis can be viewed as a measure of efficiency. 
A passive robot's efficiency corresponds to a straight line across the y-axis at its $\alpha$ value.

As the $h$ value increases, the active robot's efficiency increases, which follows from the fact that as it drives, it can collect more dust. While we see an initial drop in efficiency due to a $\beta$ increase, owing to the fact that the active robot collects less dust passively, this effect is then dwarfed by a faster moving robot that can cover more ground.

\begin{figure}[t!]
\centering
\includegraphics[width=.99\columnwidth]{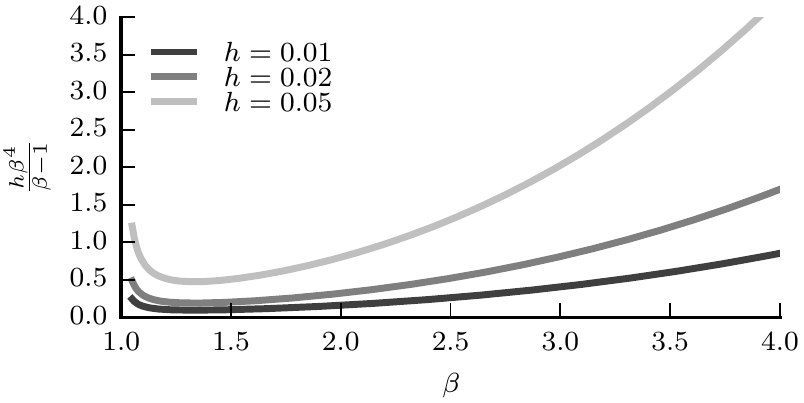}
\caption{Comparing the efficiency of each robot type as a function of $\beta$.}
\label{fig:model_comparison}
\end{figure}

\section{User Study}
\label{sec:user_study}

Having a developed a model of passive and active dust collecting robots we used a user study to evaluate people's opinions on each type of robot\'s efficiency.
This is critical in developing effective robots as we need to explore the possible discrepencies between perceived versus actual robot capability~\cite{cha2015perceived}.

\subsection{Experimental Setup}
\label{sec:experimental_setup}

We created an online form to evaluate users opinions of passive and active dust collecting robots. 
Provided users with \dref{def:passive_robot} and \dref{def:active_robot}, we then asked them the following questions:
 
\begin{myquestion}
Which type of robot do you think collects more dust: an active dust collecting robot or a passive dust collecting robot? Why?
\label{q:collects_more}
\end{myquestion}

\begin{myquestion}
Which robot would you prefer to have?
\label{q:prefer}
\end{myquestion}

For \qref{q:prefer} the options were: Active dust collecting robot, Passive dust collecting robot, Whichever robot is the most efficient at collecting dust. Our goal in asking this was to determine what people value more, the illusion of efficiency or actual efficiency. 

\textbf{Participants} We recurited 23 Carnegie Mellon students (14 males, 9 females, aged 21-23) through online sources. 

\subsection{Analysis}
\label{sec:study_analysis}
The results of our user study can be seen in \figref{fig:user_results}. While people believe that the active robot collects more dust, people would prefer to have the most efficient robot, regardless of its capabilities. 

\begin{figure}[t!]
\centering
\includegraphics[width=.99\columnwidth]{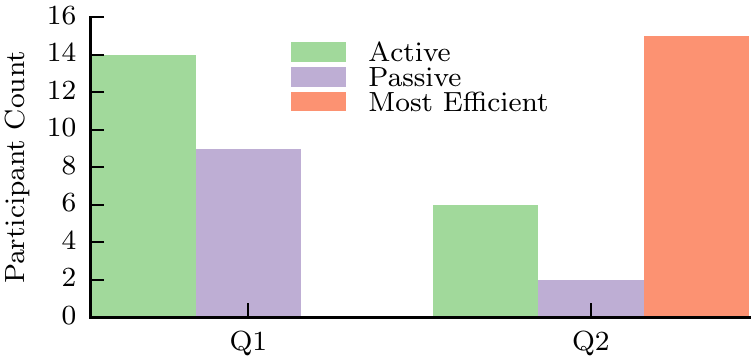}
\caption{User Study Results}
\label{fig:user_results}
\end{figure}

What is perhaps more telling is the variety of user responses we had to why they believed each robot would collect more dust. 

Those who supported passive dust collecting robots listed a variety of reasons, with many people concerns with active dusting robots dispersing and upsetting more dust than the collect.
One user rationalized his choice by the nature of dust saying "I've observed that the stuff that collects the most dust in my place are the items that are static, therefore I would assume that the static robot might collect more dust."

Still other users took a more global view with one user, as mentioned above, claiming that they "don't want a f*cking robot running around all day" and another, acccepting the harsh realities of time remarked "All robots ultimately become a passive dust-collecting robot."

For every supporter of passive robots, there were still more who argued for active robots. Almost every person, in explaining their choice, argued that active robots, due to their mobility, would be able to cover a larger space. This highlights the dichotomy between \textit{efficiency and coverage}. 

While our passive dust collecting robot can provide superior efficiency, its lack of locomotion greatly reduces is potential coverage. By constrast, the active robot has the ability to move around, coverage potentially all of the room, given some amount of time. 

\section{Field Study}
\label{sec:field_study}

\begin{figure*}[t!]
\centering
\includegraphics[width=0.99\textwidth]{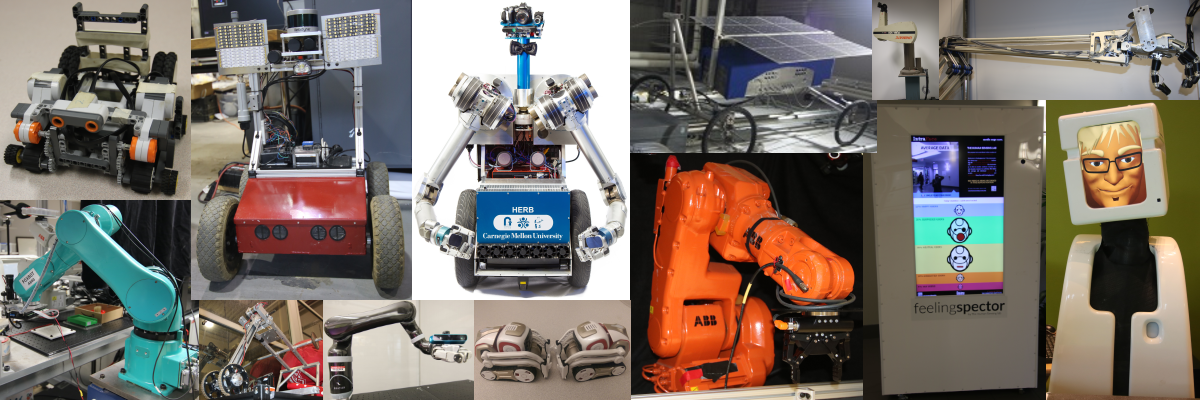}
\caption{Throughout the robots that can be seen at Carnegie Mellon's Robotics Institute we see a variety of passive dust collecting robots that range widely in shape, size, initial function and even cost.}
\label{fig:robot_montage}
\end{figure*}

Given the results of our user study in \sref{sec:user_study}, we next probe into how these preferences are reflected in reality. 
Carnegie Mellon's Robotics Institute is home is a large variety of robots and using the 2010 robot census we analyzed what kind of dust collecting robot we actually see~\cite{robot_census}. 

Of the 261 robots listed on the census\footnote{The original census data was provided directly from its author.} with complete information, we see that none of them are designed to collect dust actively. However, we can assume many of them collect dust passively. 
Twenty were listed as having no mobility, making them official passive dust collecting robots.
Even the eighty-six robots that have wheeled mobility are unlikey to be driving most of the time and therefore spend much of their life as passive dust collecting robots.  

In fact, broadening out, despite the variety in morphologies and mobilities from wheeled to winged, from manipulation to entertainment to competition, most, if not all, of the robots at the Robotics Institute spend large quantities of their tenure as passive dust collecting robots. 
While active dust collecting robots are constrained by their function to have certain properties, passive dust collecting robotics is an all-inclusive, all-accepting genre that allows for nearly any charactertization. 
We see a huge variety of robots in \figref{fig:robot_montage}.
They can be old or new, outrageously expensive or dirt cheap, beautifully crafted or hastily thrown together. 

Yet, if they do nothing, they all have the ability inside of them to be passive dust collecting robots. 
Given the guidelines provided by our model in \sref{sec:dust_model}, these robots have the capacity to be more efficient than their try-hard active collection counterparts.  
Based on the results of our study (\sref{sec:user_study}) this makes them more desirable.  

From these insights, it is now clear why the CMU Robotics Institute does not have any active dust collecting robots on record. 
They have been surpased by their more efficient, more inclusive, more desirable counterparts: passive dust collecting robots. 

\section{Discussion}
\label{sec:discussion}

While our analysis presented in \sref{sec:dust_model} outlines two classes of robots, our field study from \sref{sec:field_study} reveals a continuum of dust collecting robots.
Robots that do not active collect dust but are not entirely stationary, such as robots that are simply underused, represent the middle ground of dust collection.  
We can even think of air filters as dust collecting robots that actively collect dust but do not do so by moving themselves. This adds a new dimension of what it means for a robot to be active.

This work also aims to highlight the underappreciated advantages of passive dust collecting robots.
Passive robots, unconstrained by a need for explicit dust collecting capabilities, afford a wide range of mophologies. 
This allows for incredibly flexibility in designing the possible human-robot interaction schemes, which is critical to a cleaning robot\'s acceptance~\cite{hendriks2011robot, forlizzi2006service}.

While we focused on dust collecting robots are model generalizes to other situations, such as moving in the rain. Specifically, our model can be used to model whether you would get more wet by standing still or running through the rain. 

We hope that this work will raise awareness for passive dust collecting robots and raise further discussion on the nature of dust collection.

\section*{Acknowledgments}
This material is based upon work supported by the infinite discretionary money-bag.
We do not thank the members of the Personal Robotics Lab for helpful discussion and advice as this project was kept entirely super secret from them.

\bibliographystyle{plainnat}

\end{document}